\title[RL-MAPF]{Deadlock-Free Hybrid RL-MAPF Framework for Zero-Shot Multi-Robot Navigation}
\author{%
 \Name{Haoyi Wang$^{2}$} \Email{haoyi@wustl.edu}
 \AND
 \Name{Licheng Luo$^{1}$} \Email{lichengl@ucr.edu}
  \AND
\Name{Yiannis Kantaros$^{2}$} \Email{ioannisk@wustl.edu}
  \AND
\Name{Bruno Sinopoli$^{2}$} \Email{bsinopoli@wustl.edu}
  \AND
  \Name{Mingyu Cai$^{1}$} \Email{mingyu.cai@ucr.edu}\\
 \addr $^{1}$Department of Mechanical Engineering, University of California Riverside, CA, USA\\
 \addr $^{2}$Department of Electrical and Systems Engineering, Washington University in St. Louis, MO, USA
}
\begin{document}

\maketitle

\begin{abstract}%
 Multi-robot navigation in cluttered environments presents fundamental challenges in balancing reactive collision avoidance with long-range goal achievement. When navigating through narrow passages
 or confined spaces, deadlocks frequently emerge that prevent agents from reaching their destinations, particularly when Reinforcement Learning (RL) control policies encounter novel configurations out of learning distribution. Existing RL-based approaches suffer from limited generalization capability in unseen environments. We propose a hybrid framework that seamlessly integrates RL-based reactive navigation with on-demand Multi-Agent Path Finding (MAPF) to explicitly resolve topological deadlocks. Our approach integrates a safety layer that monitors agent progress to detect deadlocks and, when detected, triggers a coordination controller for affected agents. The framework constructs globally feasible trajectories via MAPF and regulates waypoint progression to reduce inter-agent conflicts during navigation.
 Extensive evaluation on dense multi-agent benchmarks shows that our method boosts task completion from marginal to near-universal success, markedly reducing deadlocks and collisions. When integrated with hierarchical task planning, it enables coordinated navigation for heterogeneous robots, demonstrating that coupling reactive RL navigation with selective MAPF intervention yields a robust, zero-shot performance. Additional details are available at \url{https://wanghaoyi518.github.io/rl-mapf-project-page/}.
 
\end{abstract}

\begin{keywords}%
  Reinforcement Learning, Multi-Agent Systems, Navigation, Motion planning%
\end{keywords}

\section{Introduction}

\emph{Multi-agent long-range navigation} in obstacle-rich environments requires both global guidance and reliable local interaction handling. Classical geometric local planners such as Velocity Obstacles (VO)/ Reciprocal Velocity Obstacles (RVO) / Optimal Reciprocal Collision Avoidance (ORCA) offer fast, distributed collision avoidance \citep{fiorini1998vo,snape2011hrvo,vandenberg2011orca,alonsomora2012borca}, while deep RL controllers leverage high-dimensional observations for reactive goal pursuit and short-horizon interactions \citep{everett2018ga3c,chen2017cadrl,chen2019sarl,han2022rlrvo}. For long-range tasks, a common practice is to generate a coarse \emph{global path} (e.g., A* or Theta*) and let the RL policy \emph{track a sequence of waypoints} along that path, combining topological guidance with learned local navigation \citep{hart1968astar,nash2007theta,nash2010lazytheta,yakovlev2017aasipp, cai2023overcoming, luo2025bridging}. This “global-waypoints + local RL” pattern has been widely adopted to bridge large-scale routing with dynamics-aware local control. However, RL-based distributed navigation \emph{alone} often struggles to generalize in complex, unseen layouts; the space of environmental patterns is vast, and purely learned policies can deteriorate out-of-distribution. Even under the waypoint-tracking setup, \emph{multi-agent} deployments introduce additional failure modes: independently generated waypoint lists can place agents on \emph{mutually proximate} targets or corridors, triggering persistent reciprocal avoidance that prevents agents from reaching waypoints. To define such mechanisms, We say a \emph{deadlock} occurs (see Fig.~\ref{fig:deadlock}) when a subset of agents, under purely reciprocal/local policies, enters a persistent standstill at a bottleneck and cannot make goal-directed progress \citep{dergachev2021distributed,grover2019deadlock,wang2008fast}.

\begin{figure}[t]
  \centering
  \includegraphics[width=0.7\linewidth]{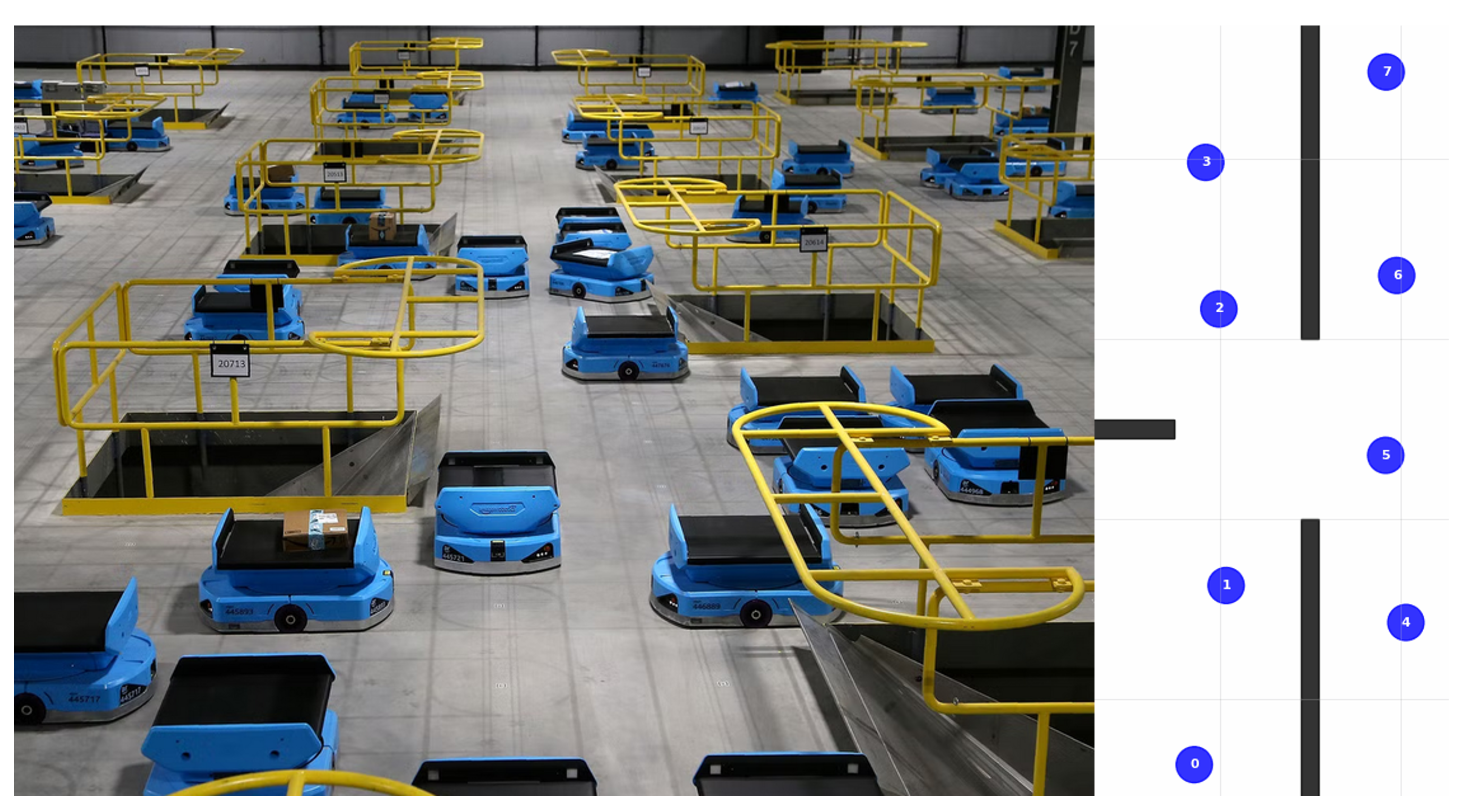}
  \caption{Deadlock phenomenon in practice and simulation. Reciprocal avoidance can saturate at bottlenecks, leading to deadlocks (photo adapted from \citet{bui2023surveyMRMP}). 
  }

  \label{fig:deadlock}
\end{figure}

On the other hand, MAPF methods can compute coordinated, collision-free paths with near-optimality guarantees on discretized graphs \citep{sharon2015cbs,barer2014ecbs,li2021eecbs,dewilde2014pnr,surynek2021gcbc}, but incur non-trivial computational cost at scale and—crucially—typically assume agents can \emph{faithfully follow} planned paths under simplified kinematic or time models, which may be brittle in practice; these issues motivate using global plans for guidance while avoiding always-on team-wide coordination \citep{stern2019mapfsurvey,vandenberg2011orca,ma2021lifelong,li2021mapf_lns, ge2025efficient, huang2024impact}.

We propose a \emph{triggered hybrid} that preserves decentralized efficiency almost everywhere while injecting \emph{locally confined coordination} only when necessary. By default, robots run an \emph{RL mode} that tracks a global waypoint list and performs reciprocal avoidance. When a lightweight \emph{deadlock detector} flags a persistent standstill around a bottleneck (e.g., a doorway swap), we crop a local grid over the implicated agents and solve a small MAPF instance. For each such agent, the solver returns a short \emph{dense waypoint list}—a time-ordered sequence of reference waypoints. \emph{Crucially, the low-level controller remains the same RL policy}, which now tracks these dense waypoints while handling local avoidance; non-implicated agents continue in the default RL mode. After the dense lists are executed (the stalemate cleared), agents seamlessly return to tracking their original global waypoints. The local subplans rely on grid-based coordination primitives for quick, topology-aware disentanglement with bounded spatial/temporal scope \citep{dewilde2014pnr,barer2014ecbs,li2021eecbs}.

The key is to \emph{separate regimes by topology and time scale}. (i) In easy regimes with ample free space, reactive RL/ORCA-like behavior suffices for progress and safety \citep{vandenberg2011orca,alonsomora2012borca}. (ii) In permutation-constrained regimes (doorways, swaps, narrow loops) where deadlocks are detected, a brief, \emph{locally confined} MAPF solve produces for only the implicated agents a short \emph{dense waypoint list} that is topologically valid for disentanglement \citep{dewilde2014pnr,sharon2015cbs}. Because the \emph{low-level controller remains the same RL policy}, these dense waypoints act as short-horizon references while RL continues to handle reciprocal avoidance and dynamics; once executed, agents revert to tracking their original global waypoints. Thus coordination is \emph{on demand and local}, keeping the amortized overhead small while reliably breaking stalemates by reordering a few agents rather than coordinating the whole team \citep{dergachev2020combo,dergachev2021distributed,stern2019mapfsurvey}.
The contributions are as follows: (i) A \emph{deadlock-triggered, locally confined MAPF} module that augments a decentralized RL navigator only when needed; (ii) a practical pipeline that unifies reactive waypoint tracking with brief, topology-aware reordering, keeping computation and intervention \emph{localized}; (iii) empirical evidence across dense exchanges and narrow passages showing high success, compared against strong baseline \citep{han2022rlrvo}. Additional materials can be found on the project website \url{https://wanghaoyi518.github.io/rl-mapf-project-page/}.

\section{Related Work}

\paragraph{RL-based multi-robot navigation.}
Deep reinforcement learning (DRL) has emerged as a powerful paradigm for interactive, decentralized collision avoidance, where a policy maps onboard observations of nearby agents/obstacles to continuous controls with no explicit communication. Early work such as CADRL/GA3C-CADRL learned value or policy networks that anticipate reciprocal interactions among multiple decision-making agents, later incorporating attention to model higher-order human–robot and human–human influences in dense crowds \citep{chen2017cadrl,everett2018ga3c,chen2019sarl}. Sensor-level DRL for multi-robot collision avoidance and crowd navigation further improves generalization across layouts and team sizes \citep{long2018drlmaca,fan2020ijrr}. These approaches show strong short-horizon reactivity and scalability to a variable number of neighbors, but may suffer from myopic behaviors and limited long-range consistency, especially in cluttered layouts with bottlenecks and symmetry-induced stand-offs. Structure-aware formulations that encode velocity-obstacle geometry into the learning pipeline, e.g., using VO/RVO/ORCA vectors in the observation and shaping rewards with predicted collision time, improve safety and data efficiency while retaining decentralized execution \citep{fiorini1998vo,vandenberg2011orca,snape2011hrvo,alonsomora2012borca,han2022rlrvo}. Classical highly parallel local planners (e.g., ClearPath) underscore the efficiency of VO-style formulations but share the same deadlock risks under reciprocity \citep{guy2009clearpath}. Our RL navigation policy follows this line by leveraging RVO-shaped state/reward design yet targets \emph{long-range} settings where purely reactive reciprocity is insufficient.

\paragraph{MAPF and hybrid navigation.}
MAPF plans joint, collision-free trajectories on discrete abstractions. Optimal and bounded-suboptimal variants of Conflict-Based Search provide strong solution-quality guarantees, while complete polynomial-time schemes such as Push-and-Rotate ensure feasibility under sufficient free space \citep{sharon2015cbs,barer2014ecbs,li2021eecbs,dewilde2014pnr,boyarski2015icbs}. For geometric guidance over long distances, global planners like A* supply compact waypoint sequences. MAPF has continued to scale via anytime/meta-heuristics (MAPF-LNS) and analyses of solution-quality vs.\ team size \citep{li2021mapflns,atzmon2020increasingcost}, with extensions to continuous time important for bridging to execution \citep{andreychuk2022ctmapf}. Benchmarks and surveys consolidate definitions, variants, and standardized evaluation (e.g., MovingAI) \citep{stern2019mapfsurvey,stern2019benchmarks}. 

Purely reactive stacks (e.g., ORCA) are computationally efficient and provably collision-free under reciprocity assumptions but are prone to deadlocks in narrow passages; hybrid pipelines therefore combine global/any-angle waypoints with reciprocal avoidance during execution and \emph{escalate} to locally confined MAPF when detectors flag prospective topological deadlocks \citep{vandenberg2011orca,dergachev2020combo,dergachev2021distributed}. Empirically, integrating Push-and-Rotate or (E)ECBS as an on-demand subsolver within an ORCA/Theta* stack boosts success dramatically in tight environments; we build on this architecture but replace hand-crafted local controllers with an RVO-shaped RL policy, preserving fast decentralized response while invoking MAPF only when necessary to re-order agents and restore flow \citep{dewilde2014pnr,barer2014ecbs,li2021eecbs,dergachev2020combo,dergachev2021distributed}.

\section{Problem Formulation}
We consider a team of \( n \) circular agents \(\mathcal{A} = \{1, \ldots, n\}\) moving in a static two-dimensional workspace \( W \subset \mathbb{R}^2 \) that contains obstacles \( O \subset W \) and free space \( F = W \setminus O \). Each agent \( i \) has radius \( r_i > 0 \), state \( x_i(t) = (p_i(t), v_i(t)) \), position \( p_i(t) \in F \), velocity \( v_i(t) \in \mathbb{R}^2 \), and speed limit \( \|v_i(t)\| \le v_i^{\max} \), where time is discrete with step size \(\Delta t > 0\). Agents sense and communicate within a fixed range \( R > 0 \); the neighbor set of agent \( i \) at time \( t \) is defined as \(\mathcal{N}_i(t) = \{ j \in \mathcal{A} \setminus \{i\} \mid \|p_j(t) - p_i(t)\| \le R \}\), and agents exchange only the local information necessary for reciprocal avoidance and deadlock resolution. Each agent \( i \) is assigned a start position \( s_i \in F \) and a goal position \( g_i \in F \), and the objective is to drive all agents from their respective start to goal positions. 

A joint trajectory \(\Pi = \{\pi_1, \ldots, \pi_n\}\) with \(\pi_i = \{p_i(t)\}_{t \ge 0}\) is considered collision-free if, for all \(t\) and all \(i \neq j\), (i) \(p_i(t) \in F\) and (ii) \(\|p_i(t) - p_j(t)\| > r_i + r_j\); in practice, each agent selects instantaneous velocities that satisfy reciprocal collision-avoidance constraints with both neighbors and obstacles. A run is deemed successful if there exists a finite time \(T\) such that \(p_i(T) = g_i\) for all \(i\) while maintaining safety for all \(t \le T\); the team success rate is reported in experiments. At each time step, each agent applies a decentralized hybrid control policy \(\pi_{\mathrm{RL}}\) that uses local observations to generate a commanded velocity driving progress toward the goal while avoiding collisions with nearby agents and obstacles.
When local progress stalls or reciprocal stand-offs occur, a deadlock is detected using the predicate:
\begin{equation}
    \mathsf{deadlock}(t) = \mathrm{true} \quad\text{if}\quad
    \underbrace{\frac{1}{k}\sum_{\tau=t-k+1}^{t} \mathrm{prog}_i(\tau)}_{\text{ego progress}} < \varepsilon_p
    \;\wedge\;
    \underbrace{\sum_{j \in \mathcal{N}_i(t)} \mathbf{1}\{\mathrm{prog}_j^{(k)} < \varepsilon_p\}}_{\text{neighbors stalled}} \ge \varepsilon_n.
\end{equation}
Detection of a deadlock triggers the formation of a locally confined MAPF instance over a grid subregion \( G' \subset F \) encompassing the involved agents and relevant obstacles. The subproblem uses the agents' current positions as starts and local waypoints as goals. A MAPF solver then computes a short-horizon joint plan for the affected agents, which they execute synchronously before returning to the decentralized policy \(\pi_{\mathrm{RL}}\).  
For long-range navigation, agents may optionally follow sparse guidance waypoints generated by a global planner on \( F \) (e.g., A*), serving as intermediate targets for \(\pi_{\mathrm{RL}}\) between MAPF interventions.

\noindent\textbf{Objective.} 
Develop an integrated framework combining the decentralized policy \(\pi_{\mathrm{RL}}\), the deadlock predicate, subregion construction \(G'\), and the MAPF solver \(\mathcal{P}\) to maximize team success in cluttered, previously unseen environments, while minimizing coordination overhead and maintaining the efficiency and responsiveness of \(\pi_{\mathrm{RL}}\) during normal operation.

\section{Methodology}

\begin{wrapfigure}{r}{0.6\textwidth}
    \vspace{-8pt}
    \centering
    \includegraphics[width=0.95\linewidth]{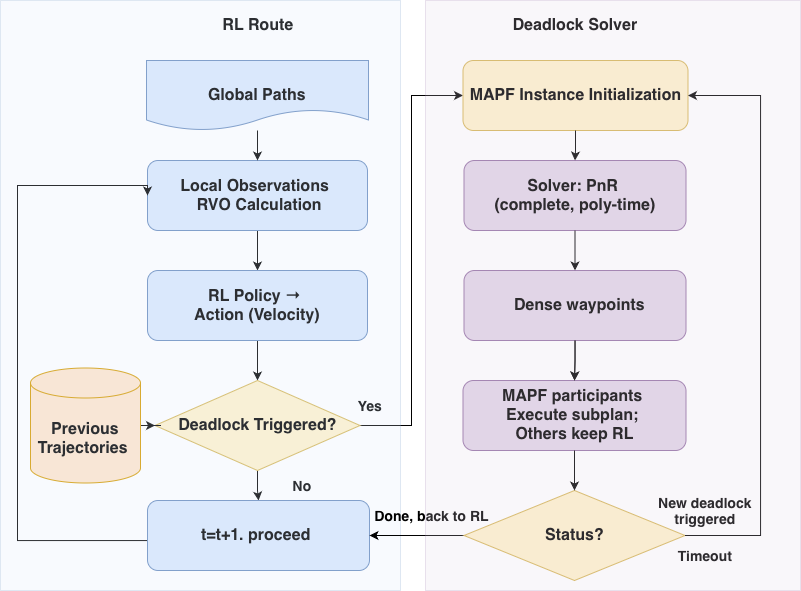}
    \caption{Hybrid RL+MAPF workflow.}
    \label{fig:rl-mapf-pipeline}
    \vspace{-8pt}
\end{wrapfigure}

We adopt the agent model, sensing assumptions, and VO/RVO formalism defined in Problem Formulation. Our controller is layered and switches \emph{only} a small, locally selected group when necessary; all other agents remain in decentralized RL. 
\textit{How the pieces connect:} \S 4.1 first produces per-agent \emph{global waypoint lists} on a grid to provide long-range guidance. \S4.2 details the \emph{RL navigation policy} that runs at every step for all agents, encoding VO/RVO-based neighbor features and outputting smooth velocity increments that track either the global waypoints (default) or temporary dense waypoints. \S4.3 monitors short-horizon progress and reciprocal interactions to \emph{trigger} coordination only when local stalemates are detected. Upon a trigger, \S4.4 forms a small \emph{locally confined MAPF} on a cropped subgrid (PnR / bounded-suboptimal CBS variants) and returns, for the implicated agents only, a short \emph{dense waypoint list}. These waypoints are executed by the \emph{same} RL low-level policy; once completed, agents \emph{seamlessly hand back} to the default RL navigation and continue tracking their original global waypoints. \S4.5 gives formal verification for the mathematical properties of our method. The overall workflow is summarized in Fig.~\ref{fig:rl-mapf-pipeline}.


\noindent\textbf{4.1. Global Path Planning (Waypoint Generation): }Each agent $i$ is assigned a long-range geometric \emph{guide} $\Gamma_i$ from start $p_{i,0}$ to goal $g_i$ on a known static map. We adopt a grid-based shortest-path planner (e.g., A*\citep{hart1968astar}) at a resolution shared with local planning to avoid frame/scale drift and then \emph{resample} the path into a waypoint sequence $\mathcal{W}_i=\{w_{i,k}\}_{k=0}^{K_i}$ (uniform or distance-threshold spacing). At runtime, the waypoint manager advances the active waypoint when within a reach threshold. 

\noindent\textbf{4.2. RL navigation policy: }
We use a decentralized learned local navigation policy that outputs smooth velocity \emph{increments} and runs on-board for each agent. Agents are discs of radius $R$ moving in $\mathcal{W}_{\text{free}}$ with step $\Delta t$; the ego agent $i$ has pose $p_{i,t}\!\in\!\mathbb{R}^2$ and velocity $v_{i,t}\!\in\!\mathbb{R}^2$, and we update $v_{i,t+1}=\operatorname{clip}\!\big(v_{i,t}+\mu\,\Delta v_{i,t},\|\,\cdot\,\|\le v_{\max}\big)$ and $p_{i,t+1}=p_{i,t}+v_{i,t+1}\Delta t$, where the policy outputs $\Delta v_{i,t}$ which is converted from $(v_x,v_y)$ to $(v,\omega)$ by a heading-alignment controller. 

\emph{Observations.} We encode ego–neighbor information using VO/RVO geometry \citep{fiorini1998vo,vandenberg2011orca,han2022rlrvo}. Let $\mathrm{ori}_{i,t}\!\in\!(-\pi,\pi]$ be heading, $g_i$ the goal, $w^{\text{curr}}_i$ the active global waypoint, and $v^{\text{des}}_{i,t}=k_p\,\frac{w^{\text{curr}}_i-p_{i,t}}{\|w^{\text{curr}}_i-p_{i,t}\|}$ with gain $k_p>0$. Neighbors within radius $R_{\text{sense}}$ form $\mathcal{N}_i(t)=\{\,j\neq i~|~\|p_{j,t}-p_{i,t}\|\le R_{\text{sense}}\,\}$. For each $j\in\mathcal{N}_i(t)$, we build a 6D descriptor $c_{ij}=(v^{\text{apex}}_{ij},\ell^L_{ij},\ell^R_{ij})\in\mathbb{R}^6$ with apex $v^{\text{apex}}_{ij}=v_{j,t}$ and VO boundary rays $\ell^{L/R}_{ij}$ induced by relative position $p_{j,t}-p_{i,t}$ and combined radius $R_c=2R$; we also record distance $d_{ij}=\|p_{j,t}-p_{i,t}\|$. The time-to-collision $t^{(e)}_{ij}$ is the smallest $\tau>0$ such that $v_{i,t}-v_{j,t}\in\mathrm{VO}_{ij}(\tau)$ (set to $+\infty$ if no collision is predicted), and we map it to a scalar risk $r^{(e)}_{ij}=1/(t^{(e)}_{ij}+0.2)$ to emphasize imminence. The policy input is $o^{\text{self}}_{i,t}=[v_{i,t},\,\mathrm{ori}_{i,t},\,v^{\text{des}}_{i,t},\,R_c]$ and $o^{\text{sur}}_{i,t}=\{(c_{ij},\,d_{ij},\,r^{(e)}_{ij})\}_{j\in\mathcal{N}_i(t)}$.

\emph{Network \& training.} We aggregate the variable-size set $\{(c_{ij},d_{ij},r^{(e)}_{ij})\}$ with a BiGRU encoder, concatenate with $o^{\text{self}}_{i,t}$, and feed actor–critic heads trained by PPO \citep{schulman2017ppo}. 

\emph{Reward.} To promote goal progress and reciprocal safety \citep{han2022rlrvo}, we use 
$r_t=\alpha\big(\|p_{i,t}-g_i\|-\|p_{i,t+1}-g_i\|\big)-\beta\,\Phi_{\mathrm{RVO}}(\{c_{ij}\})-\gamma\,\psi\big(\min_{j\in\mathcal{N}_i(t)} t^{(e)}_{ij}\big)-\eta\,\mathbf{1}\{\text{collision}\}$, 
where $\alpha,\beta,\gamma,\eta>0$ are weights; $\Phi_{\mathrm{RVO}}$ is a smooth, nonnegative VO/RVO penetration measure in velocity space (e.g., sum of hinge distances of $v_{i,t}$ to ORCA half-planes), and $\psi(\cdot)$ penalizes short $t^{(e)}$ (e.g., hinge/reciprocal), both vanishing when constraints are satisfied.

\noindent\textbf{4.3. Deadlock Detection and RL$\rightarrow$MAPF Switching Trigger: }Purely reactive, decentralized avoidance (e.g., rvo-style reciprocity) is efficient in open space but can stall in tight passages where agents must be \emph{re-ordered} rather than merely sidestepping \citep{vandenberg2011orca,alonsomora2012borca}. 
This module monitors local kinematics to \emph{detect} such stalemates early and \emph{escalates only a minimal set} of implicated agents to a short, locally confined MAPF subproblem—leaving all others in the RL mode. 
This on-demand, localized coordination has been shown to reliably resolve doorway/corridor stand-offs while keeping amortized overhead low \citep{dergachev2021distributed,dergachev2020combo}.

\noindent\textbf{Design overview (triggers and roles).}
We employ three complementary triggers, each targeting a distinct deadlock symptom. 
(i) \emph{Speed / non-progress} detects \emph{mutual slowing without goal-directed motion} inside a neighborhood—an operational sign of reciprocal stand-off. 
(ii) \emph{Waypoint-stuck} detects \emph{long-horizon stagnation} by monitoring whether the active waypoint index advances under the global guidance protocol. 
(iii) \emph{Core-pair risk} identifies an \emph{imminent, reciprocally coupled pair} via short-horizon Time-to-Collision (TTC) / min-distance, which seeds the minimal coordination set. 
Their \emph{union} yields a fast, low-noise detector that triggers locally confined MAPF only when re-ordering is necessary; non-implicated agents remain in decentralized RL.

Let $p_i(t),v_i(t)\in\mathbb{R}^2$ be the position and velocity of agent $i$ at step $t$.
Denote the active target (current waypoint or final goal) by $w^{\mathrm{curr}}_i(t)$ and the sensed neighbor set
$\mathcal{N}_i(t)=\{\,j\neq i:\|p_j(t)-p_i(t)\|\le R\,\}$ with radius $R$.
Define the $K$-step trailing mean speed
$\bar{s}_i(t)=\frac{1}{K}\sum_{\tau=t-K+1}^{t}\|v_i(\tau)\|$,
the unit goal direction
$\hat{d}_i(t)=\frac{w^{\mathrm{curr}}_i(t)-p_i(t)}{\|w^{\mathrm{curr}}_i(t)-p_i(t)\|}$ when $\|w^{\mathrm{curr}}_i(t)-p_i(t)\|>\varepsilon_{\mathrm{goal}}$,
and the instantaneous progress
$\mathrm{prog}_i(t)=\langle v_i(t),\hat{d}_i(t)\rangle$.
To avoid spurious triggers we apply gating:
$t\ge T_{\mathrm{warm}}$, $\|p_i(t)-w^{\mathrm{curr}}_i(t)\|>\varepsilon_{\mathrm{goal}}$, and
$t-t^{\mathrm{last}}_i\ge T_{\mathrm{cool}}$ (where $t^{\mathrm{last}}_i$ is the last detection time).

\paragraph{Speed / non-progress trigger.}
Flag a potential stalemate when an agent moves slowly while at least one neighbor is also slow and not making forward progress:
\begin{equation}
\label{eq:spd}
\mathsf{D}_{\mathrm{spd}}(i,t)\;\Leftrightarrow\;\Big(\bar{s}_i(t)<v_{\mathrm{low}}\Big)\ \wedge\ \Big(\exists j\in\mathcal{N}_i(t):\ \bar{s}_j(t)<v_{\mathrm{low}}\ \wedge\ \mathrm{prog}_j(t)\le 0\Big),
\end{equation}
with hyperparameters $v_{\mathrm{low}}$ (low-speed threshold) and $K$ (window length).

\paragraph{Waypoint-stuck trigger.}
If the active waypoint index $k_i(t)$ has not advanced for a budget $T_{\mathrm{wp}}$ while the agent remains outside the goal tolerance, raise a deadlock:
\begin{equation}
\label{eq:wp}
\mathsf{D}_{\mathrm{wp}}(i,t)\;\Leftrightarrow\;\Big(t-t^{\mathrm{wp}}_i(t)\ge T_{\mathrm{wp}}\Big)\ \wedge\ \Big(\|p_i(t)-w^{\mathrm{curr}}_i(t)\|>\varepsilon_{\mathrm{goal}}\Big),
\end{equation}
where $t^{\mathrm{wp}}_i(t)$ is the last time $k_i(\cdot)$ changed.

\paragraph{Core-pair risk trigger.}
Using a short-horizon reciprocal-avoidance proxy popular in the literature, including TTC and minimum distance \citep{vandenberg2011orca,alonsomora2012borca}, define
\[
r_{ij}=p_j-p_i,\qquad u_{ij}=v_j-v_i,\qquad
\mathrm{ttc}_{ij}=\begin{cases}
+\infty,& \|u_{ij}\|^2\le\epsilon\ \text{or}\ \langle r_{ij},u_{ij}\rangle\ge 0\\[2pt]
-\,\dfrac{\langle r_{ij},u_{ij}\rangle}{\|u_{ij}\|^2},& \text{otherwise}
\end{cases},
\]
and $\mathrm{dmin}_{ij}=\|r_{ij}\|$ if $\mathrm{ttc}_{ij}=+\infty$, else
$\mathrm{dmin}_{ij}=\big\|r_{ij}+\mathrm{ttc}_{ij}u_{ij}\big\|$.
Let $b(i)=\arg\min_{j\in\mathcal{N}_i(t)}\mathrm{ttc}_{ij}$ (ties broken deterministically).
A mutually “most-at-risk” pair with imminent interaction activates:
\begin{equation}
\label{eq:risk}
\mathsf{D}_{\mathrm{risk}}(i,t)\;\Leftrightarrow\;\Big(b(b(i))=i\Big)\ \wedge\ \Big(\mathrm{ttc}_{i\,b(i)}<\tau_{\mathrm{ttc}}\ \vee\ \mathrm{dmin}_{i\,b(i)}<\delta_{\mathrm{min}}\Big),
\end{equation}
with thresholds $\tau_{\mathrm{ttc}}$ and $\delta_{\mathrm{min}}$ (and small $\epsilon$ for numerical stability).

\paragraph{Union rule and switch logic.}
A deadlock is declared for agent $i$ whenever any trigger fires: $\mathsf{D}(i,t)\;=\;\mathsf{D}_{\mathrm{spd}}(i,t)\ \vee\ \mathsf{D}_{\mathrm{wp}}(i,t)\ \vee\ \mathsf{D}_{\mathrm{risk}}(i,t)$, subject to the gating above. From the seed agent, we form a \emph{minimal} participant set by (i) including the core pair when \eqref{eq:risk} holds and (ii) applying a tiny consensus closure within $\mathcal{N}_i(t)$ to cover the immediate stalemate.
The selected set is \emph{locked} for $T_{\mathrm{lock}}$ steps to prevent oscillations; only agents for which a valid locally confined MAPF subplan is found switch to the MAPF mode, and all others remain in decentralized RL. This localized, on-demand escalation mirrors prior “locally confined MAPF” hybrids that resolve doorway/corridor stand-offs without team-wide coordination.



\noindent\textbf{4.4. Local MAPF Instance: }The triggers in \S 4.3 produce a \emph{local, minimal} set of implicated agents and a detection time $t$. 
This subsection turns that signal into a concrete coordination problem by (i) freezing the participant set and identifying the \emph{coordination group}, (ii) cropping a subgrid $\mathcal{G}'$ around them, (iii) instantiating a \emph{locally confined MAPF} $\mathcal{I}'=\langle \mathcal{G}',\mathcal{A}_{\mathrm{lc}},S,T\rangle$ using current poses as starts and projected active waypoints as local goals, and (iv) solving $\mathcal{I}'$ quickly to obtain per-agent \emph{dense waypoint lists}. 
These lists are executed by the \emph{same decentralized RL policy} (others remain in RL), and once the dense segments finish, all agents revert to tracking their original global waypoints. 
This “detect→crop→coordinate→handover” design follows prior locally confined MAPF frameworks for doorway/corridor stand-offs and uses PnR for fast, complete reordering on the subgrid.
Upon a deadlock trigger, a set of nearby agents enters a locally confined coordination phase on a subgrid $\mathcal{G}'$.
For the deadlock-triggering agent $i$ at time $t$, we define the local neighbor set within sensing radius $R$ as
$\mathcal{N}_i(t)\;=\;\big\{\, j\in \mathcal{A}\setminus\{i\}\ \big|\ \|\,p_j(t)-p_i(t)\,\|_2 \le R \,\big\}, \mathcal{A}_{\mathrm{lc}}(t)=\{i\}\,\cup\,\mathcal{N}_i(t)$. 
Here, $\mathcal{A}$ is the finite index set of agents; $p_i(t)\in\mathbb{R}^2$ denotes the position of agent $i$ at discrete step $t\in\mathbb{N}$; $R>0$ is the sensing radius; and $\|\,\cdot\,\|_2$ is the Euclidean norm.

All the other agents remain in decentralized RL mode. We construct $\mathcal{G}'$ by padding the participants’ axis-aligned bounding box with margin $m>0$ and discretizing at resolution $h_g$. Starts $S(\cdot)$ are assigned by nearest-cell projection of current poses; local goals $T(\cdot)$ are obtained by projecting each agent’s active global waypoint into $\mathcal{G}'$.

\paragraph{Solver and execution.}
We solve the local MAPF instance $\mathcal{I}'=\langle \mathcal{G}'{=}(V',E'),\,\mathcal{A}_{\mathrm{lc}},\,S,\,T\rangle$ with \emph{Push-and-Rotate (PnR)} algorithm \citep{dewilde2014pnr}. The joint discrete plan for each participant is converted into a short \emph{dense waypoint list} (time-ordered references) and tracked by the \emph{same} RL low-level controller under a mild speed cap; non-participants continue with RL. After the dense lists are executed (stalemate cleared), agents revert to the default RL navigator and resume tracking their original global waypoint lists. Full pseudo-code for PnR on the cropped subgrid, along with the completeness condition (two blanks per component) and the polynomial-time bound, are provided in Appendix~\ref{app:pnr}.


\noindent\textbf{4.5. Formal Guarantees for Triggered RL+MAPF Coordination: }We give formal guarantees for the \emph{trigger $\rightarrow$ locally confined MAPF $\rightarrow$ execution} cycle on the cropped subgrid $\mathcal{G}'$, as used by the Hybrid method. 
Let $\mathcal{A}_{\mathrm{lc}}(t)$ be the participant set at trigger time $t$, and let $\mathcal{I}'=\langle \mathcal{G}',\mathcal{A}_{\mathrm{lc}},S,T\rangle$ be the induced local MAPF instance; $S$ projects current poses to $V'$, and $T$ projects the \emph{active global waypoints} to $V'$.
PnR primitives, invariants, and theorems are stated in Appendix~\ref{app:pnr}.
Let $\Pi$ be the collision-free joint discrete plan returned by PnR on $\mathcal{G}'$ with horizon $L$.
For each $a\in\mathcal{A}_{\mathrm{lc}}$, build a dense waypoint list $\mathcal{W}^{\mathrm{dense}}_a$ by projecting the vertices of $\Pi$ to world-frame cell centers and collapsing consecutive waits.
During execution, the same decentralized RL policy tracks $\mathcal{W}^{\mathrm{dense}}_a$ under the standard speed bound.

\begin{theorem}[Finite-step deadlock clearance]
\label{thm:finite_clearance_main}
Assume that every connected component $C$ of $\mathcal{G}'$ contains at least two blanks ($b(C)\ge 2$) and that the induced instance $\mathcal{I}'$ is solvable. 
Then PnR returns $\Pi$; moreover, after tracking $\{\mathcal{W}^{\mathrm{dense}}_a\}$, every $a\in\mathcal{A}_{\mathrm{lc}}$ increases its active global \emph{waypoint index} by at least one in a finite number of simulator steps. 
Hence the local stalemate is cleared and long-range progress resumes for all participants.
\end{theorem}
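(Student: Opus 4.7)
The plan is to chain three arguments: (i) invoke PnR completeness on $\mathcal{G}'$ to obtain a finite-horizon discrete plan $\Pi$; (ii) verify that the induced dense waypoint lists $\{\mathcal{W}^{\mathrm{dense}}_a\}$ are geometrically trackable in bounded time by the RL policy; and (iii) conclude that completing these lists forces the waypoint manager of every participant to advance its active global waypoint index by at least one.

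First I would invoke the PnR completeness and complexity result of Appendix~\ref{app:pnr}. The hypotheses---every connected component $C$ of $\mathcal{G}'$ has $b(C)\ge 2$ and $\mathcal{I}'$ is solvable---are exactly the preconditions of that theorem, so PnR terminates and returns a collision-free joint discrete plan $\Pi$ of horizon $L$ polynomial in $|\mathcal{G}'|$ and $|\mathcal{A}_{\mathrm{lc}}|$. This settles the first clause of the theorem.

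Next I would convert $\Pi$ into the continuous references $\mathcal{W}^{\mathrm{dense}}_a$ via the project-and-collapse rule of \S4.4 and verify two structural facts. First, vertex/edge conflict-freeness of $\Pi$ yields cell-center spacing of at least the grid resolution $h_g$ across agents at every synchronized step, which is strictly feasible for the combined radius $R_c=2R$ under a standing condition $h_g>R_c$. Second, consecutive entries of $\mathcal{W}^{\mathrm{dense}}_a$ are either equal or adjacent grid neighbors inside $\mathcal{G}'$, so each tracked segment is a short, obstacle-free corridor. Invoking a local goal-reaching property of the RL policy---under satisfied spacing, the reciprocal-penalty term $\Phi_{\mathrm{RVO}}$ is already zeroed and the progress term $\alpha(\|p-g\|-\|p'-g\|)$ dominates---each segment is traversed in $O(h_g/v_{\min})$ simulator steps. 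Summing over the $L$ segments yields a finite execution time, and since the local goal $T(a)$ is the projection of the active global waypoint $w^{\mathrm{curr}}_a$, arrival at $T(a)$ places the agent inside any reasonable waypoint-reach threshold of $w^{\mathrm{curr}}_a$, triggering the waypoint manager to advance the index of every $a\in\mathcal{A}_{\mathrm{lc}}$.

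The main obstacle is the per-segment tracking bound: unlike PnR completeness, which is a purely combinatorial statement, this step depends on the learned RL policy making monotone progress inside a short, obstacle-free corridor with benign neighbor geometry. I would formalize it as a standing assumption on the policy---e.g., that the trained controller $\varepsilon$-tracks any waypoint within a horizon $T_{\mathrm{track}}(h_g)$ whenever inter-agent spacing respects $R_c$---after which the theorem reduces to a clean composition of PnR completeness, spacing feasibility, and bounded tracking. Removing this assumption would require a quantitative Lyapunov-style analysis of the actor-critic network, which is outside the scope of the guarantee we are claiming.
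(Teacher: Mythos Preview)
Your proposal is correct and follows essentially the same three-link chain as the paper: invoke PnR completeness on $\mathcal{G}'$ to obtain a finite collision-free plan $\Pi$, argue that the dense waypoint lists derived from $\Pi$ are tracked to their endpoints in finitely many simulator steps, and observe that arrival at $T[a]$ (the projection of the active global waypoint) forces the waypoint manager to increment the index for every $a\in\mathcal{A}_{\mathrm{lc}}$. The paper's proof is terser and treats the RL-tracking step as a bare assertion (``bounded speed and cell diameter''), whereas you are more explicit in isolating it as a standing assumption on the learned controller; that added candor is appropriate, since neither argument actually proves this step from properties of the trained network.
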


\begin{proof}
By Appendix~\ref{app:pnr}, Theorem~\ref{thm:pnr-completeness}, PnR is complete on $\mathcal{G}'$ under $b(C)\ge2$ and returns a collision-free plan $\Pi$ whenever $\mathcal{I}'$ is solvable (cf.\ \citealp{dewilde2014pnr}). 
By construction (\S.4.4), each local target $T[a]$ is the grid projection of $a$'s \emph{active global waypoint} region. 
Mapping $\Pi$ to $\mathcal{W}^{\mathrm{dense}}_a$ preserves vertex order and yields a finite sequence. 
The decentralized RL tracker advances along $\mathcal{W}^{\mathrm{dense}}_a$ and enters the cell of $T[a]$ within finitely many simulator ticks (bounded speed and cell diameter). 
The waypoint manager then increments the active waypoint index for $a$. 
Since this holds for every $a\in\mathcal{A}_{\mathrm{lc}}$, the stalemate is cleared for the whole group.
\end{proof}

\begin{theorem}[Polynomially bounded coordination overhead]
\label{thm:bounded_overhead_main}
Under the same precondition $b(C)\ge2$, the number of primitive operations emitted by PnR on $\mathcal{G}'$ is polynomial in $|V'|$ and $|\mathcal{A}_{\mathrm{lc}}|$ (Appendix~\ref{app:pnr}, Theorem~\ref{thm:pnr-polytime}). 
Consequently, the plan horizon $L$ and the total dense-waypoint length $\sum_{a\in\mathcal{A}_{\mathrm{lc}}} |\mathcal{W}^{\mathrm{dense}}_a|$ are $O(\mathrm{poly}(|V'|,|\mathcal{A}_{\mathrm{lc}}|))$, implying that the \emph{per-trigger duty cycle} (fraction of execution steps under coordination) is polynomially bounded by crop size and group size.
\end{theorem}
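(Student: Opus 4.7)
The plan is to chain three polynomial bounds: (i) a count on PnR primitives from the appendix; (ii) an expansion from primitives to joint-plan length; and (iii) a per-waypoint execution-time bound from the speed cap used by the RL tracker. First, I would invoke Appendix~\ref{app:pnr}, Theorem~\ref{thm:pnr-polytime}, which states that, whenever $b(C)\ge2$ on every connected component $C$ of $\mathcal{G}'$, PnR terminates after at most $N_{\mathrm{prim}}=O(\mathrm{poly}(|V'|,|\mathcal{A}_{\mathrm{lc}}|))$ emitted primitives (push, rotate, swap, clear). This step is the foundation for every subsequent bound and uses \emph{no} new facts beyond what the appendix establishes.

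Next, I would translate emitted primitives into plan horizon. Each PnR primitive corresponds to a constant number of single-edge moves along $E'$ for a constant number of agents, so the discrete joint plan $\Pi$ has horizon $L\le c\cdot N_{\mathrm{prim}}$ for a universal constant $c$, which is still $O(\mathrm{poly}(|V'|,|\mathcal{A}_{\mathrm{lc}}|))$. The dense waypoint list $\mathcal{W}^{\mathrm{dense}}_a$ is obtained from $\Pi$ by projecting vertices to cell centers and collapsing consecutive waits; since collapsing cannot increase length, $|\mathcal{W}^{\mathrm{dense}}_a|\le L$ for each $a$, and summing yields $\sum_{a\in\mathcal{A}_{\mathrm{lc}}}|\mathcal{W}^{\mathrm{dense}}_a|\le |\mathcal{A}_{\mathrm{lc}}|\cdot L = O(\mathrm{poly}(|V'|,|\mathcal{A}_{\mathrm{lc}}|))$.

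Finally, for the duty-cycle claim, I would bound the number of simulator steps needed to traverse a single dense waypoint. With cell side $h_g$ and the mild speed cap $v_{\max}$ on the tracker, an agent enters the next cell in at most $\lceil h_g/(v_{\max}\,\Delta t)\rceil=O(1)$ simulator ticks (this also matches the argument used in Theorem~\ref{thm:finite_clearance_main}). Multiplying by the total dense length gives a per-trigger coordination window of $O(\mathrm{poly}(|V'|,|\mathcal{A}_{\mathrm{lc}}|))$ simulator steps, and dividing by the gating cooldown $T_{\mathrm{cool}}$ (or any fixed horizon between triggers) yields a polynomially bounded duty cycle in crop and group size.

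The main obstacle I anticipate is the bookkeeping between the discrete PnR time axis and the continuous simulator axis: one must argue that the RL tracker, while it continues to reactively avoid neighbors, still advances along $\mathcal{W}^{\mathrm{dense}}_a$ in $O(1)$ ticks per waypoint. The clean way is to assume (as the RL-mode low-level controller already enforces) a strictly positive effective speed toward the next cell center under the speed cap, so that each cell transition takes constant time; this is the same non-degeneracy used implicitly in Theorem~\ref{thm:finite_clearance_main} and avoids any new hidden dependence on $|V'|$ or $|\mathcal{A}_{\mathrm{lc}}|$ in the per-waypoint cost.
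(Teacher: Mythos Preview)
Your proposal is correct and follows essentially the same route as the paper: invoke Theorem~\ref{thm:pnr-polytime} for a polynomial primitive count, observe that each primitive contributes $O(1)$ moves per agent so $L$ is polynomial, note that collapsing waits does not inflate the dense lists, and finish with the bounded-speed/bounded-cell argument giving $O(1)$ ticks per waypoint. Your bookkeeping is slightly more explicit than the paper's (you write $\sum_a|\mathcal{W}^{\mathrm{dense}}_a|\le|\mathcal{A}_{\mathrm{lc}}|\cdot L$ whereas the paper simply asserts $O(L)$, and you add the division by $T_{\mathrm{cool}}$), but these are refinements of the same argument rather than a different approach.
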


\begin{proof}
Appendix~\ref{app:pnr}, Theorem~\ref{thm:pnr-polytime} states that PnR runs in polynomial time and emits a polynomial number of primitive operations in $|V'|$ and $|\mathcal{A}_{\mathrm{lc}}|$ (cf.\ \citealp{dewilde2014pnr}). 
Each primitive contributes $O(1)$ appended vertices per agent when forming $\Pi$, thus $L=O(\mathrm{poly}(\cdot))$. 
The mapping $\Pi\mapsto\{\mathcal{W}^{\mathrm{dense}}_a\}$ collapses waits and never expands steps, so $\sum_a |\mathcal{W}^{\mathrm{dense}}_a|=O(L)$. 
Execution consumes at most a constant number of simulator ticks per dense waypoint (bounded speed, bounded cell diameter), hence the duty cycle per trigger is polynomially bounded as claimed.
\end{proof}

\medskip
\noindent\textit{Context.}
Our “trigger $\rightarrow$ locally confined MAPF $\rightarrow$ handover” construction follows the locally confined MAPF design for resolving doorway/corridor stand-offs (e.g., \citealp{dergachev2021distributed}) and uses PnR for fast, complete reordering on the crop while all non-participants remain in decentralized RL.

\section{Experiment}


\textbf{5.1. Simulation Setup:} We evaluate in two static 2D maps that stress long-range progress under geometric bottlenecks: a \textbf{doorway} (short bottleneck) and a \textbf{corridor} (long single-lane) scenario, both known to induce reciprocal stand-offs for purely reactive VO style controllers \citep{han2022rlrvo,fiorini1998vo,vandenberg2011orca,stern2019mapfsurvey}. Robots are discs moving in discrete time with kinematics and sensing as in Problem Formulation; all core simulator parameters are summarized in Appendix~\ref{app:sim-params}.

For long-range guidance we use A* algorithm to generate per-agent global waypoint lists on a grid aligned with the world frame. Local navigation is \emph{always} executed by the same trained RL policy (RL-RVO style features and shaping), which tracks either global waypoints (default) or dense temporary waypoints returned by the local MAPF when triggered.
We compare \textbf{RL-only} vs.\ \textbf{RL+MAPF (ours)} under identical policies and maps to isolate the effect of the triggered local coordination. All the training and evaluations are conducted on a single workstation with CPU Intel Core i9-13900KF and GPU NVIDIA GeForce RTX 4080. The RL policy is trained by using PPO as the on-policy optimizer \citep{schulman2017ppo}; other training details follow \citet{han2022rlrvo}.


\noindent\textbf{5.2. Methods Under Comparison:}
Our goal is \emph{not} to rank different local navigation models. Instead, we propose a
\textbf{hybrid wrapper}---deadlock detection $+$ locally confined MAPF---that can be applied to an
existing local navigator with minimal changes. Accordingly, we hold the base RL navigator
fixed and compare it \emph{with vs.\ without} the wrapper to isolate
the incremental effect of local coordination, in line with prior locally confined hybrids \citep{dergachev2021distributed}. We conducted the comparison as:

\begin{itemize}[left=0pt]
  \item \textbf{RL-only (Base).} The decentralized RL local navigator runs alone (no deadlock detector, no MAPF), following \citet{han2022rlrvo}.
  \item \textbf{Hybrid (Ours).} The \emph{same} RL navigator augmented with: (i) the deadlock detector; and (ii) a locally confined MAPF module that solves the
  cropped subproblem with \textbf{Push-and-Rotate (PnR)} \citep{dewilde2014pnr}.%
\end{itemize}




\noindent\textbf{5.3. Evaluation Protocol \& Metrics:} For each scenario (doorway, corridor), each agent count/density, and each method (RL-only vs.\ Hybrid), we run \(N=100\) independent episodes and report aggregated results.
An episode is declared \emph{timeout} if any agent has not reached its goal within \(T_{\max}=1000\) simulation steps. An episode is a \emph{success} iff \emph{all} agents reach their goals within \(T_{\max}\) with no collisions. To ensure fairness,all methods share identical environment and protocol.
The primary metric is \emph{success rate}, standard for MAPF/navigation evaluations where reaching all goals is the main objective. We do
not claim improvements in average time/path length across all settings; local coordination can
introduce bounded waiting or detours by design. 

\begin{table}[t]
  \centering
  \caption{Success rate (\%) over $N{=}100$ trials per setting.}
  \label{tab:sr-combined}
  \vspace{2pt}
  \begin{tabular}{l ccc ccc}
    \hline
     & \multicolumn{3}{c}{\textbf{Doorway}} & \multicolumn{3}{c}{\textbf{Corridor}} \\
    \textbf{Method} & \textbf{4} & \textbf{6} & \textbf{8} & \textbf{4} & \textbf{6} & \textbf{8} \\
    \hline
    RL-only      & 15.63\% & 47.95\% & 39.34\% & 8.82\% & 0.00\% & 0.00\% \\
    Hybrid (PnR) & 100.00\% & 98.44\% & 95.45\% & 100.00\% & 100.00\% & 100.00\% \\
    \hline
  \end{tabular}
\end{table}

\begin{figure*}[t]
  \centering
  \includegraphics[width=0.8\textwidth]{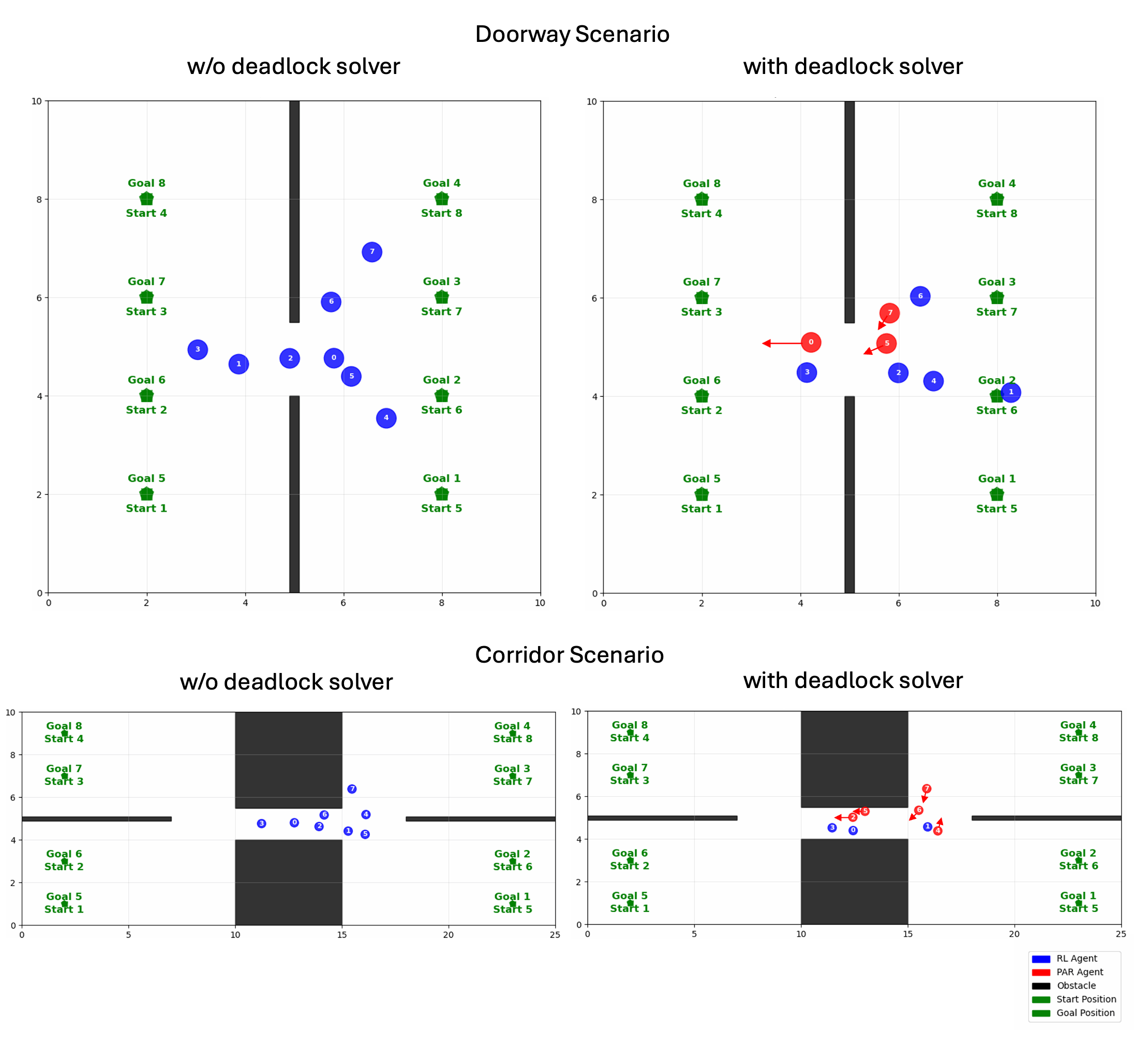}
  \caption{Qualitative comparison in \textbf{Doorway} (left two panels) and \textbf{Corridor} (right two panels). 
  }
  \label{fig:qual-door-corr}
\end{figure*}

\noindent\textbf{5.4. Result Analysis (Doorway Scenario): }
We use a single short bottleneck with doorway width \(1.5\,\mathrm{m}\).
Agent counts are \(\{4,6,8\}\). Starts/goals are \emph{fixed and symmetric} (left starts are right goals and vice versa).
For each agent count and each method (RL-only vs.\ Hybrid), we run \(N{=}100\) episodes with timeout \(T_{\max}{=}1000\) steps and declare success only if \emph{all} agents reach goals.
Table~\ref{tab:sr-combined} reports the success rate.
Hybrid (RL + deadlock trigger + local PnR) consistently improves completion in this short-bottleneck setting.
Figure~\ref{fig:qual-door-corr} (left panels) visualizes the effect: RL-only (left-most) settles into a reciprocal “yield–stall” in front of the doorway, whereas with the deadlock solver (second panel) a small red subset is temporarily coordinated to pass, restoring two-way flow while non-participants remain in RL. In short bottlenecks, \emph{minimal} coordination suffices to unlock progress, aligning with our on-demand local MAPF design.

\noindent\textbf{5.5. Result Analysis (Corridor Scenario): }
We evaluate a long single-lane corridor of length \(5\,\mathrm{m}\) and width \(1.5\,\mathrm{m}\) with two-way traffic.
Agent counts are \(\{4,6,8\}\) with the same fixed symmetric starts/goals.
We use the same protocol as the doorway case: \(N{=}100\), \(T_{\max}{=}1000\), success iff all agents reach goals.
Table~\ref{tab:sr-combined} summarizes the rates: RL-only degrades sharply with density, while the Hybrid maintains high completion via occasional PnR interventions.
Figure~\ref{fig:qual-door-corr} (right panels) shows why: the long bottleneck induces a persistent impasse under RL-only (third panel), but the triggered local MAPF (fourth panel, red subset) produces a feasible passing order inside the cropped subgrid and clears the stalemate; the rest of the team continues under decentralized RL.
This matches our goal: invoke coordination only where/when re-ordering is necessary, leaving the remainder to RL.





\section{Conclusions}
We proposed a hybrid, model-agnostic navigation framework that integrates decentralized RL with on-demand, locally bounded MAPF to resolve topological deadlocks. The wrapper intervenes only when progress stalls, applying Push-and-Rotate on a cropped subregion before returning control to RL. In doorway and corridor scenarios, the method consistently turned stalled interactions into successful completions while keeping coordination local. Future work includes adaptive deadlock detection, dynamic subregion sizing, and solver selection to further balance success and efficiency.

\bibliography{refs}

\appendix

\appendix

\section{Locally Confined Push-and-Rotate (PnR) on a Cropped Subgrid}
\label{app:pnr}

This appendix formalizes the primitives and guarantees of the
\emph{locally confined} Push-and-Rotate (PnR) procedure used by our deadlock
solver. PnR is complete on graphs whose every connected component has at least
two unoccupied vertices (``blanks'') and admits polynomial running time bounds
\citep{dewilde2014pnr,dewilde2013aamas}. We restate the setting on the cropped
subgrid $\mathcal{G}'$ and then give formal definitions for the primitives
(\emph{push, swap, rotate}) followed by complete proofs of the two theorems.

\subsection{Problem restatement on the cropped subgrid}
Given a cropped subgrid $\mathcal{G}'=(V',E')$ around the implicated agents
$\mathcal{A}_{\mathrm{lc}}$, we define a local MAPF instance
$\mathcal{I}'=\langle \mathcal{G}', \mathcal{A}_{\mathrm{lc}}, S, T\rangle$,
where $S:\mathcal{A}_{\mathrm{lc}}\!\to V'$ and $T:\mathcal{A}_{\mathrm{lc}}\!\to V'$ are the
start and (local) target assignments obtained by projecting current poses and
active global waypoints into $\mathcal{G}'$. 
Time is discretized by unit steps; at most one agent occupies a vertex at
any time; moves are to adjacent vertices or a wait action.

A (partial) configuration is a function $A:V'\to \mathcal{A}_{\mathrm{lc}}\cup\{\bot\}$,
where $A(v)=\bot$ denotes a blank. For a connected component $C$ of $\mathcal{G}'$,
let $b(C):=|\{v\in C:A(v)=\bot\}|$ be its blank count. We maintain a set $F\subseteq\mathcal{A}_{\mathrm{lc}}$
of \emph{finished} agents frozen at their targets.

\subsection{PnR primitives: formal definitions}
We write $A\Rightarrow A'$ for a single discrete-time transition of the
configuration (possibly moving multiple non-conflicting agents simultaneously).
Below, $P=(v_0,\ldots,v_k)$ denotes a simple path in $\mathcal{G}'$.

\paragraph{Push.}
Let $r\in\mathcal{A}_{\mathrm{lc}}\setminus F$ be the active agent and
$P=(v_0,\ldots,v_k)$ a shortest path from $A^{-1}(r)=v_0$ to $T[r]$ in
$\mathcal{G}'\setminus A[F]$. Suppose the next vertex $v_1$ is occupied by a
\emph{blocker} $b\notin F$ and there exists a simple path
$Q=(u_0,\ldots,u_\ell)$ inside the same connected component $C$ with
$u_0=v_1$, $u_\ell$ blank, and all $u_j$ distinct such that $b(C)\ge 2$ holds
throughout.\footnote{If needed, a short \emph{clear} step is taken to create a
temporary blank away from $A[F]$, as in \citet{dewilde2014pnr}.}
A \emph{push} of $b$ along $Q$ is the sequence of transitions that shifts the
blank from $u_\ell$ back to $u_0=v_1$:
\[
A(u_j)=\begin{cases}
\bot & j=\ell\\
x_{j+1} & 0\le j<\ell
\end{cases}
\quad\leadsto\quad
A'(u_j)=\begin{cases}
x_j & 1\le j\le \ell\\
\bot & j=0
\end{cases}
\]
where $x_j=A(u_j)$ are the pre-push occupants. After the push, $v_1$ becomes
blank, enabling $r$ to advance to $v_1$ on the next step. No vertex in $A[F]$
is modified.

\paragraph{Swap (within a biconnected block).}
Let $u,v$ be adjacent vertices in the same biconnected component (block)
$B\subseteq C$ with $A(u)=r$, $A(v)=b\notin F$, and $b(B)\ge 1$ (there exists
a blank elsewhere in $B$). A \emph{swap} is a constant-size sequence of moves
inside $B$ that exchanges the positions of $r$ and $b$ while leaving all other
agents unchanged and preserving $b(C)\ge 2$. Existence follows from the
presence of a cycle in $B$ and one blank, allowing a 3–4 move exchange
on a simple cycle (standard \emph{15-puzzle}–style maneuver) \citep{luna2011pushswap}.

\paragraph{Rotate (on a simple cycle).}
Let $Z=(z_0,\ldots,z_{k-1},z_k=z_0)$ be a simple cycle in a component $C$
with at least one blank on $Z$. A \emph{rotate} moves each agent on $Z$ to the
next vertex along the cycle in the same direction, filling the blank; since
$C$ has $b(C)\ge 2$, at least one blank remains off-cycle, preserving the
two-blank invariant. Rotation never touches $A[F]$ and keeps agents inside the
same block \citep{dewilde2014pnr}.

\subsection{Algorithmic scheme}
We instantiate PnR on $\mathcal{G}'$ as in Alg.~\ref{alg:pnr-appendix}: route
one active agent along a shortest path that avoids finished agents; when encountering
conflicts, resolve them by \emph{push}/\emph{swap}/\emph{rotate}. The joint plan $\Pi$
is later converted into a short \emph{dense waypoint list} for each participant and
tracked by the same RL low-level controller.

\begin{algorithm}[t]
\caption{Locally confined Push-and-Rotate (PnR) on cropped subgrid {\color{red} Fix the Alg.} $\mathcal{G}'$}
\label{alg:pnr-appendix}
\DontPrintSemicolon
\SetKwInOut{Input}{Input}\SetKwInOut{Output}{Output}
\Input{$\mathcal{G}'=(V',E')$; participants $\mathcal{A}_{\mathrm{lc}}$; starts $S$; targets $T$}
\Output{Joint collision-free discrete plan $\Pi$ over $\mathcal{G}'$}
\BlankLine
$A \leftarrow S$ \tcp*[r]{current assignment (at most one agent per vertex)}
$F \leftarrow \varnothing$ \tcp*[r]{finished set (agents fixed at targets)}
\While{$F \neq \mathcal{A}_{\mathrm{lc}}$}{
  Choose $r \in \mathcal{A}_{\mathrm{lc}}\setminus F$ (e.g., by ID or heuristic)\;
  Compute a shortest path $p$ from $A^{-1}(r)$ to $T[r]$ in $\mathcal{G}' \setminus A[F]$\;
  \While{$A^{-1}(r) \neq T[r]$}{
    $u \gets$ next vertex on $p$\;
    \uIf{$u$ is blank}{
      move $r \rightarrow u$; append to $\Pi$; update $A$\;
    }
    \uElseIf{$p$ closes a simple cycle in the current block}{
      \textsc{Rotate} along that cycle; append moves; update $A$\;
    }
    \uElseIf{there exists a blocker $b \notin F$ at $u$}{
      \textsc{Push} $b$ to a nearby blank (preserving two blanks per component and avoiding $A[F]$);\;
      append moves; update $A$\;
    }
    \Else{
      \textsc{Swap} $r$ with an adjacent blocker in the same block; append moves; update $A$\;
    }
  }
  $F \leftarrow F \cup \{r\}$ \tcp*[r]{freeze $r$ at $T[r]$}
}
\Return{$\Pi$}
\end{algorithm}

\subsection{Invariants and potential function}
We maintain the following invariants at every step:
\begin{enumerate}[label=(I\arabic*)]
\item \textbf{Finished fixed:} $A[F]=T[F]$ and finished vertices are never modified.
\item \textbf{Two blanks per component:} $b(C)\ge 2$ for every connected component $C$.
\item \textbf{Active simple path:} for the active agent $r$, there exists a simple path
      $p_r$ from $A^{-1}(r)$ to $T[r]$ in $\mathcal{G}'\setminus A[F]$.
\end{enumerate}
Define the lexicographic potential
\[
\Phi(A,r)\;=\;\big(\;D_r(A),\;K_r(A),\;H_r(A)\;\big)\in\mathbb{N}^3,
\]
where (i) $D_r$ is the graph distance from $A^{-1}(r)$ to $T[r]$ in
$\mathcal{G}'\setminus A[F]$, (ii) $K_r$ is the number of non-finished blockers
present on the chosen shortest path $p_r$, and (iii) $H_r$ is the sum of
shortest-path distances from each blocker on $p_r$ to the nearest blank within
its component (computed in $\mathcal{G}'\setminus A[F]$). We order potentials
lexicographically, i.e., $(a,b,c)\prec(a',b',c')$ iff $a<a'$ or ($a=a'$ and $b<b'$)
or ($a=a',b=b'$ and $c<c'$).

\begin{lemma}[Local progress]
\label{lem:local-progress}
Under invariants (I1)–(I3), any of the primitives (\emph{push}, \emph{swap}, \emph{rotate})
preserves (I1)–(I3) and yields $\Phi(A',r)\prec \Phi(A,r)$, where $A\Rightarrow A'$ is the
resulting configuration (and $r$ remains the active agent unless it moves to its target).
\end{lemma}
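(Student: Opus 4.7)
The plan is to argue the lemma primitive-by-primitive, verifying the three invariants and exhibiting a strict decrease in the lexicographic potential $\Phi=(D_r,K_r,H_r)$ for each of \emph{push}, \emph{swap}, and \emph{rotate}. For invariant preservation, the common thread is that every primitive is defined to act only on vertices in the current component $C$ and never to touch vertices in $A[F]$: this gives (I1) by inspection of the definitions in \S A.2. For (I2) I will observe that \emph{push} only shifts occupants along the simple path $Q$, so the multiset of blanks within $C$ is preserved; \emph{rotate} moves a single blank one position around the cycle while leaving off-cycle blanks untouched, so $b(C)$ is invariant; and \emph{swap} is a constant-size 15-puzzle maneuver confined to a biconnected block $B\subseteq C$ using one in-block blank, returning that blank to $B$ afterward and hence leaving $b(C)$ unchanged. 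For (I3), after each primitive I will re-derive the active simple path from $A^{-1}(r)$ to $T[r]$ in $\mathcal{G}'\setminus A[F]$: since $F$ is untouched and the component $C$ is still connected (primitives do not remove edges), such a path continues to exist.

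For the potential decrease I will split into three cases. In the \emph{swap} case, the adjacent blocker $b$ sitting on the next vertex of $p_r$ is exchanged with $r$, so $r$ moves one step along $p_r$ toward $T[r]$; I will show $D_r(A')=D_r(A)-1$, giving lex-decrease in the first coordinate regardless of $K_r,H_r$. The \emph{rotate} case is analogous: by hypothesis the cycle is encountered along the current shortest path, so rotating by one position advances $r$ along a cycle edge that lies on (or strictly shortens) $p_r$; I will argue $D_r(A')\le D_r(A)-1$ by comparing the shortest-path distance to $T[r]$ before and after, again yielding first-coordinate decrease. The delicate case is \emph{push}, where $r$ does \emph{not} move and so $D_r(A')=D_r(A)$; here I will argue that the second coordinate strictly drops, i.e.\ $K_r(A')\le K_r(A)-1$. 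The reason is that the blocker $b$ that occupied $v_1\in p_r$ is shifted off of $v_1$ along $Q$ to a vertex that, by the choice of $Q$ (a simple path ending at a blank, chosen disjoint from the tail of $p_r$ when possible), is not on $p_r$; the other occupants shifted by one along $Q$ were already blanks or non-path vertices before the push, so they do not contribute new path-blockers.

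The subtle subcase inside \emph{push} is when $Q$ unavoidably intersects the tail of $p_r$ and the push merely ``rotates'' a blocker to another path vertex, so $K_r$ might not strictly decrease. I will handle this by appealing to the third coordinate $H_r$: the push consumed a blank (at $u_\ell$) to create one at $v_1$ adjacent to $r$, which strictly reduces the sum of shortest-path distances from path-blockers to their nearest blanks — the newly created blank at $v_1$ is at distance $0$ from the leading path-blockers, whereas before the push at least one blocker contributed a strictly larger distance. Thus $(D_r,K_r)$ is unchanged while $H_r$ strictly decreases, still giving $\Phi(A',r)\prec\Phi(A,r)$. Together, the three cases close the lemma; the main obstacle will be this tie-breaking argument for \emph{push}, for which I expect to spell out the ``nearest-blank'' bookkeeping carefully and to invoke the two-blanks invariant (I2) to guarantee that $H_r$ is well-defined and finite throughout, so that the lexicographic comparison is valid.
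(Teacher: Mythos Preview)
Your primitive-by-primitive strategy and your handling of invariants (I1)--(I3) match the paper's proof essentially verbatim; the interesting divergence is in \emph{which} coordinate of $\Phi$ you claim drops for each primitive. For \emph{swap}, you argue $D_r$ decreases by one because $r$ itself advances into the blocker's cell; the paper instead says $K_r$ drops because the front blocker leaves $p_r$. Both are valid, and yours is arguably cleaner since it hits the leading coordinate. For \emph{rotate}, however, you assume $r$ lies on the cycle and that the rotation direction is chosen to advance $r$, yielding $D_r(A')\le D_r(A)-1$; the paper takes the more conservative view that rotation merely cycles the blank past a path-blocker, yielding $K_r\!\downarrow$ or $H_r\!\downarrow$ without requiring $r$ to move. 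Since the algorithm's trigger is only that ``$p$ closes a simple cycle in the current block,'' $r$ need not sit on the rotated cycle, so your $D_r$ argument may not apply in general---you should be prepared to fall back to the $K_r/H_r$ bookkeeping there. Your treatment of \emph{push} ($D_r$ fixed, then $K_r\!\downarrow$ in the generic case, else $H_r\!\downarrow$ via the freshly created blank at $v_1$) is exactly the paper's argument, including the tie-breaking subcase you flag.
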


\begin{proof}
(I1) holds since primitives explicitly avoid $A[F]$. (I2) holds because each primitive
either moves an existing blank within a component (push), uses a cycle with at least one
off-cycle blank guaranteed by $b(C)\ge 2$ (rotate), or executes a constant-size exchange
within a biconnected block without consuming blanks (swap) \citep{dewilde2014pnr,luna2011pushswap}.
For $\Phi$: a blank delivered to the next vertex on $p_r$ either lets $r$ advance one step
(\(D_r\) strictly decreases), or reduces the count of blockers \(K_r\); when neither applies,
the relocation of a blocker strictly reduces its distance to a blank, decreasing \(H_r\).
Rotation on a cycle containing a blocker advances the blank past that blocker, decreasing
\(K_r\) or \(H_r\). Swap decreases \(K_r\) directly by exchanging \(r\) with the front blocker.
Thus $\Phi$ strictly decreases in lexicographic order.
\end{proof}

\subsection{Main guarantees}
\begin{theorem}[Completeness with two blanks per component]
\label{thm:pnr-completeness}
Let $\mathcal{G}'$ be any graph in which every connected component $C$ has at least
two blanks, i.e., $b(C)\ge 2$ for all $C$. Then for any solvable instance
$\mathcal{I}'=\langle \mathcal{G}',\mathcal{A}_{\mathrm{lc}},S,T\rangle$,
PnR returns a collision-free plan $\Pi$ that solves $\mathcal{I}'$.
\end{theorem}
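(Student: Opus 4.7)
The plan is to establish termination and correctness via a nested induction: an outer induction on $|\mathcal{A}_{\mathrm{lc}}\setminus F|$ (the number of unfinished agents) combined with an inner termination argument for the per-agent routing loop based on the potential $\Phi$ of Lemma~\ref{lem:local-progress}. The base case $F=\mathcal{A}_{\mathrm{lc}}$ is immediate, so the real work lies in the inductive step: show that one more agent can be routed to its target, frozen, and the invariants (I1)--(I3) propagated to the reduced instance together with preservation of solvability and the two-blanks-per-component condition.

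First I would show that, under (I1)--(I3), the inner while loop in Alg.~\ref{alg:pnr-appendix} terminates in finitely many steps. By Lemma~\ref{lem:local-progress}, every primitive (\emph{push}, \emph{swap}, \emph{rotate}) preserves (I1)--(I3) and strictly decreases the lexicographic potential $\Phi(A,r)=(D_r,K_r,H_r)\in\mathbb{N}^3$. Since $\mathbb{N}^3$ is well-ordered lexicographically and $\Phi$ is bounded below by $(0,0,0)$, the inner loop must reach $A^{-1}(r)=T[r]$ in finitely many iterations, at which point $r$ is appended to $F$. Collision-freeness of the emitted $\Pi$ follows directly from the primitives' definitions, which only move agents into currently blank vertices while maintaining at most one agent per vertex throughout.

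The main obstacle, and the crux of the proof, is the \emph{agent-ordering} step inside the outer loop: the next active agent $r\in\mathcal{A}_{\mathrm{lc}}\setminus F$ must be chosen so that freezing it at $T[r]$ leaves the residual instance on $\mathcal{G}'\setminus A[F\cup\{r\}]$ still solvable and still satisfies $b(C)\ge 2$ in every connected component. This is not automatic: greedily finishing an arbitrary agent can disconnect the free-space graph or consume the second blank in some block, violating (I2) or (I3) for the remaining agents. I would handle this by importing the priority ordering of \citet{dewilde2014pnr}: decompose $\mathcal{G}'$ into biconnected blocks, maintain the block-cut tree as agents are frozen, and at each iteration select an agent whose target lies in a leaf block of the current tree (with the \emph{polygon}/reroute subroutine as a fallback when no such target exists). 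This ordering guarantees that removing the frozen agent from the usable graph never isolates a residual component and never depletes blanks below two.

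With the ordering in place, the inductive step closes: after $r$ is finished, $|F|$ increases by one, invariants (I1)--(I3) hold for the reduced instance on $\mathcal{G}'\setminus A[F]$, and solvability is preserved by the block-tree argument, so the inductive hypothesis applies to route the next agent. The outer induction terminates when $F=\mathcal{A}_{\mathrm{lc}}$, yielding a finite collision-free joint plan $\Pi$ that drives every participant from $S[a]$ to $T[a]$. I expect the hard part to be purely structural---verifying the block-tree-based priority order and the correctness of the polygon fallback when a leaf target does not exist; once that is in hand, the potential-function argument of Lemma~\ref{lem:local-progress} delivers the rest mechanically.
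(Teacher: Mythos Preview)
Your proposal is correct and shares the paper's skeleton---inner-loop termination via the well-founded potential $\Phi$ from Lemma~\ref{lem:local-progress}, and an outer iteration that freezes one agent at a time---but you are considerably more careful on a point the paper elides. The paper's proof simply says ``pick any $r$'' and asserts that freezing $r$ at $T[r]$ ``preserves (I1)--(I2)'', without arguing why (I3) (existence of a path in $\mathcal{G}'\setminus A[F]$) continues to hold for the \emph{next} active agent once $T[r]$ is excised from the usable graph, nor why the residual instance remains solvable. You correctly flag this as the crux and resolve it by importing the block--cut-tree priority ordering (with the polygon fallback) from \citet{dewilde2014pnr}, which guarantees that each freeze leaves enough connectivity and blanks for the remaining agents. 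This buys you a genuinely self-contained completeness argument; the paper's version is shorter precisely because it treats that structural step as a black box absorbed into the citation. If you want to match the paper's brevity, you can collapse your block-tree discussion into a single appeal to the ordering lemma of \citet{dewilde2014pnr}; conversely, your expanded treatment is what one would actually need to make the proof stand on its own.
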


\begin{proof}
Initiate with $F=\varnothing$ and pick any $r\in \mathcal{A}_{\mathrm{lc}}\setminus F$.
By solvability and (I2), a path $p_r$ exists in $\mathcal{G}'\setminus A[F]$, so (I3) holds.
By Lemma~\ref{lem:local-progress}, every primitive preserves invariants and strictly
decreases $\Phi(A,r)$ until either (a) $r$ reaches $T[r]$ or (b) another agent becomes
the active agent with smaller $D_r$ at its turn; in both cases, termination occurs in
finitely many steps because $\Phi\in\mathbb{N}^3$ is well-founded. When $r$ reaches
$T[r]$, we freeze it: $F\leftarrow F\cup\{r\}$, which preserves (I1)–(I2). Repeating
this process assigns all agents to $T[\cdot]$ in finitely many phases (at most
$|\mathcal{A}_{\mathrm{lc}}|$). Since primitives never modify finished vertices and
moves are adjacent or wait, the concatenation yields a collision-free plan. Hence PnR
returns a valid solution whenever one exists.
\end{proof}

\begin{theorem}[Polynomial running time]
\label{thm:pnr-polytime}
Under the same precondition $b(C)\ge 2$ for all components $C$ of $\mathcal{G}'$,
PnR runs in time polynomial in $|V'|$ and $|\mathcal{A}_{\mathrm{lc}}|$.
\end{theorem}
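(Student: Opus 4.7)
The plan is to bound the total work of PnR by (i) counting the number of primitive invocations using the lexicographic potential $\Phi$ from Lemma~\ref{lem:local-progress}, and (ii) bounding the cost of planning and executing each primitive by standard graph routines on $\mathcal{G}'$. Write $N=|V'|$ and $n=|\mathcal{A}_{\mathrm{lc}}|$. The outer while-loop of Alg.~\ref{alg:pnr-appendix} freezes exactly one agent per iteration, so it executes at most $n$ phases. Within a single phase with active agent $r$, Lemma~\ref{lem:local-progress} shows that every primitive strictly decreases $\Phi(A,r)=(D_r,K_r,H_r)\in\mathbb{N}^3$, whose coordinates are bounded at the start of the phase by $D_r\le N$, $K_r\le n$, and $H_r\le N\cdot n$. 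Since the lexicographic order on this range has at most $(N{+}1)(n{+}1)(Nn{+}1)=O(N^2 n^2)$ values strictly below the initial one, the number of primitive invocations per phase is $O(N^2 n^2)$, and across all phases at most $O(N^2 n^3)$.

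Next, I would account for the cost of each primitive. Recomputing the active shortest path $p_r$ in $\mathcal{G}'\setminus A[F]$ requires one BFS in $O(|V'|+|E'|)$ time. Locating a nearest blank for a \textsc{Push}, or identifying a simple cycle in the current biconnected block for a \textsc{Rotate} or \textsc{Swap}, can be done by a Tarjan-style block--cut decomposition followed by one BFS, also in $O(|V'|+|E'|)$ time. Each primitive itself emits at most $O(N)$ elementary moves, since a push or rotation relocates a blank along a simple path of length at most $N$ and a swap is a constant-size maneuver inside a biconnected block. Multiplying the per-phase primitive count by the per-primitive planning and execution cost yields overall running time polynomial in $N$ and $n$, and bounds the number of \emph{emitted} primitive moves (and hence the plan horizon $L$ and the combined dense-waypoint length $\sum_a |\mathcal{W}^{\mathrm{dense}}_a|$) by $O(N^3 n^3)$; these are precisely the quantities invoked by Theorem~\ref{thm:bounded_overhead_main}.

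The main obstacle is that $\Phi$ is defined relative to the current graph $\mathcal{G}'\setminus A[F]$, which shrinks each time an agent is frozen, so one must ensure that invariant (I2), i.e., $b(C)\ge 2$ for every component of the restricted graph, is preserved throughout freezing; otherwise the per-phase termination argument collapses and the primitive count per phase is no longer bounded. This is exactly the content of the PnR ordering rule in \citet{dewilde2014pnr}, which I would cite rather than reprove. The remaining subtlety is to verify that the per-phase analysis is self-contained: because $\Phi$ only references the current active $r$ and the current restricted graph, the potentials do not accumulate across phases, so summing the per-phase bounds yields the overall polynomial bound with no hidden cross-phase dependence.
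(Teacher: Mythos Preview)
Your argument is correct. It is, however, a more explicit and self-contained route than the paper takes: the paper's proof of Theorem~\ref{thm:pnr-polytime} is essentially a citation to \citet{dewilde2014pnr} together with a one-line sketch (bounded number of phases, polynomially many primitives per phase, $O(|E'|)$ path recomputations), without exhibiting concrete exponents. You instead reuse the potential $\Phi$ of Lemma~\ref{lem:local-progress} to get a quantitative bound on primitives per phase, and then multiply by per-primitive planning and move costs. What this buys you is an explicit polynomial, in particular the $O(N^3 n^3)$ bound on emitted moves that plugs directly into Theorem~\ref{thm:bounded_overhead_main}; the paper only asserts ``polynomial'' and leaves the constants to the cited source. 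Two minor remarks: (i) your phrase ``bounded at the start of the phase'' should be ``bounded throughout the phase'' (which is indeed the case for $D_r,K_r,H_r$), since after a lex-drop in $D_r$ the lower coordinates may jump back up and the chain-length bound requires the global range; (ii) your observation that a push or rotate emits $O(N)$ moves, not $O(1)$, is more accurate than the paper's own ``constant-size pattern'' phrasing and does not affect the polynomial conclusion. Your flagging of the two-blank invariant under freezing as the one non-trivial ingredient deferred to \citet{dewilde2014pnr} is appropriate and matches how the paper handles it.
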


\begin{proof}
As shown by \citet{dewilde2014pnr}, each agent becomes active a bounded number
of phases and, in each phase, the number of primitive operations (push/swap/rotate)
is bounded by a polynomial in $|V'|$ (each operation affects a constant-size pattern,
and path recomputations can be done in $O(|E'|)$ time). Summing over at most
$|\mathcal{A}_{\mathrm{lc}}|$ agents yields a polynomial bound on the total number of
operations and the running time in $|V'|$ and $|\mathcal{A}_{\mathrm{lc}}|$. Therefore,
PnR admits a polynomial-time complexity on $\mathcal{G}'$.
\end{proof}

\subsection{From discrete plan to execution}
The joint plan $\Pi$ is converted into a short \emph{dense waypoint list} for each
participant. During execution, the same decentralized RL policy tracks these dense
waypoints while handling reciprocal avoidance; upon completion, agents revert to
the default RL navigation and resume their original global waypoint lists.

\section{Simulator Parameters}
\label{app:sim-params}

\begingroup
\small
\setlength{\tabcolsep}{6pt}
\renewcommand{\arraystretch}{1.1}
\begin{table}[h]
  \centering
  \caption{Core simulator parameters used in all experiments.}
  \vspace{2pt}
  \begin{tabular}{l c c}
    \hline
    \textbf{Parameter} & \textbf{Symbol} & \textbf{Value} \\
    \hline
    Time step & \(\Delta t\) & \(0.1\,\mathrm{s}\) \\
    Control frequency & \(f_c\) & \(10\,\mathrm{Hz}\) \\
    Max speed & \(v_{\max}\) & \(1.5\,\mathrm{m/s}\) \\
    Agent radius & \(r\) & \(0.2\,\mathrm{m}\) \\
    Sensing horizon & \(R\) & \(5\,\mathrm{m}\) \\
    Neighbor cap (impl.) & \text{n/a} & \(10\) \\
    \hline
  \end{tabular}
\end{table}
\endgroup

\end{document}